\newcommand{\R}{\mathbb{R}}
\newtheorem{thm}[]{Theorem}
\title{Quantum Geometry insights in Deep Learning}
\author{Noémie C. Combe}
\address{University of Warsaw, Faculty of Mathematics, Warsaw, Poland}
\email{n.combe@uw.edu.pl}
\begin{document}

\maketitle
\begin{abstract}
    In this paper, we explore the fundamental role of the Monge–Ampère equation in deep learning, particularly in the context of Boltzmann machines and energy-based models. We first review the structure of Boltzmann learning and its relation to free energy minimization. We then establish a connection between optimal transport theory and deep learning, demonstrating how the Monge–Ampère equation governs probability transformations in generative models. Additionally, we provide insights from quantum geometry, showing that the space of covariance matrices arising in the learning process coincides with the Connes–Araki–Haagerup (CAH) cone in von Neumann algebra theory. Furthermore, we introduce an alternative approach based on renormalization group (RG) flow, which, while distinct from the optimal transport perspective, reveals another manifestation of the Monge–Ampère domain in learning dynamics. This dual perspective offers a deeper mathematical understanding of hierarchical feature learning, bridging concepts from statistical mechanics, quantum geometry, and deep learning theory. 
\end{abstract}
\section{Introduction}
The connection between multiscale modeling, Boltzmann machines, and PDEs like the WDVV equation or the Monge–Ampère equation arises naturally in statistical mechanics and deep learning. Recent work has linked optimal transport, Wasserstein distances, and neural networks \cite{C0,C1,C2,C3, CD,DMH,Cu}.  
The Boltzmann machine’s free energy function acts like a potential function for a transport map, much like solutions to the Monge–Ampère equation.  
This means deep learning can be understood geometrically as an optimal transport process.

\, 

The \textbf{Key question} we address:
\begin{itemize}
    \item {\it Can the methods of optimal transport (via the Monge–Ampère equation) provide a rigorous foundation for certain generative modeling techniques? Could this method bring new insights for deep learning?}
\end{itemize}

\, 

This paper aims to explore the fundamental role of the Monge–Ampère equation in deep learning, particularly in the context of Boltzmann machines and energy-based models, while also providing insights from quantum geometry that could be crucial for further developments. Due to our recent research \cite{C0,C1,C2,C3} we show that this learning process can be controlled by an important theory, which is quantum geometry, allowing an entire new panel of tools to use. 

\subsection{Main statements}
\begin{center}
\begin{minipage}{11cm}
  {\bf Theorem 1.}  \textit{ Let $Z\sim \mathcal{N}(0,I_m)$ be a latent Gaussian distribution in $\R^{m}$, and let $T:\R^m \to \R^n$ be a linear transformation parametrized by a matrix $W\in \R^{n\times m}$. Define the transformed random variable: \[X=WZ.\]
Then, the covariance matrix of $X$ follows a Wishart distribution:
$$\Sigma=WW^T\sim  \mathcal{N}_n(m,I).$$
    Furthermore, the space of all such covariance matrices, forms a convex symmetric cone, called the Wishart cone,  consisting of all positive semidefinite matrices of rank at most $min(n,m)$. This cone is a Monge-Amp\`ere domain.}
    \end{minipage}
    \end{center}

      \vskip.2cm

This perspective offers a new way to study energy-based models, Boltzmann machines, and neural network training using the geometry of Wishart matrices and optimal transport theory.

Furthermore, based on the author's recent research (see \cite{C1}) we prove the following: 
\begin{center}
\begin{minipage}{11cm}
  {\bf Theorem 2.}    {\it The space of all covariance matrices 
\[
\Sigma = WW^T
\]
arising in the Boltzmann learning process, where \( W \) is an \( n \times m \) matrix with independent Gaussian columns, forms a convex cone of positive semidefinite matrices. This space is naturally endowed with a \textit{Wishart structure}, as each covariance matrix follows a Wishart distribution:

\[
\Sigma = WW^T \sim W_n(m, I).
\]

Furthermore, this space coincides with the \textit{Connes–Araki–Haagerup (CAH) cone} associated with a finite-dimensional von Neumann algebra. } 
\end{minipage}
  \end{center}

  \vskip.2cm 
  
The CAH cone, originally introduced in the context of Tomita–Takesaki modular theory, plays a fundamental role in quantum geometry and the noncommutative extension of information geometry. In particular, this canonical self-dual cone provides a natural setting for the statistical description of quantum states and their transformations.

Thus, the covariance matrices emerging from the Boltzmann learning process can be rigorously interpreted within the noncommutative geometric framework, where they form a natural subspace of the quantum statistical manifold described by the Connes–Araki–Haagerup cone.

\subsection{Implications}
Theorem 1 rigorously establishes that the set of covariance matrices arising from generative learning with Gaussian latent distributions naturally forms a geometric cone.
In deep learning and Boltzmann machines, this suggests that the optimization of covariance structures happens over a well-defined convex Wishart space.
This reinforces connections with optimal transport since the generative learning process aligns with Monge–Ampère-like structures over the Wishart cone.

Concerning Theorem B, the Connes–Araki–Haagerup cone appears in Tomita–\break Takesaki modular theory, which governs the dynamics of von Neumann algebras and noncommutative probability. If the space of covariance matrices in machine learning (or generative models) forms a CAH cone, then:

\begin{itemize}
\item The covariance matrices naturally inherit a modular flow structure.

\item The learning process (e.g., in Boltzmann machines) could be described in terms of modular automorphisms, meaning there exists a deeper algebraic structure underlying information propagation.
\end{itemize}
This could lead to new optimization methods that take advantage of mass-conserving flows to improve training efficiency.
\subsection{Plan of the paper}
The paper is structured as follows:
\begin{itemize}
    \item Section 2: Boltzmann Machines and Energy-Based Models

In this section, we introduce the theoretical foundations of Boltzmann machines and more general energy-based models. We review their probabilistic structure, particularly the role of free energy minimization in learning, and highlight how these models attempt to represent complex probability distributions. This section sets the stage for understanding how optimal transport methods, and specifically the Monge–Ampère equation, emerge in deep learning.

\item[]

\item Section 3: The Monge–Ampère Equation in Deep Learning and Boltzmann Models

Here, we establish the connection between deep learning and the Monge–\break Ampère equation. We explain how learning in Boltzmann machines and related deep generative models can be formulated in terms of optimal transport theory, where the Monge–Ampère equation governs the transformation of probability distributions. This section also integrates insights from quantum geometry, showing how certain structures in deep learning models can be interpreted through the lens of noncommutative geometry, in particular the Connes–Araki–Haagerup (CAH) cone and related mathematical frameworks.
\item[]
\item Section 4: An Alternative Approach via Renormalization Group (RG) Flow

While Section 3 develops a connection between the Monge–Ampère equation and learning, Section 4 presents a very different perspective based on renormalization group (RG) flow. This approach, though distinct from the optimal transport formulation, provides a more concrete and computationally feasible way of describing the hierarchical structure of learning. We argue that this method offers a different manifestation of the existence of a Monge–Ampère domain within the learning process, revealing how the equation's influence can be understood from multiple perspectives.

\item[]

\item Section 5: Conclusion and Future Directions

Finally, in the conclusion, we summarize the key insights and results from our discussion, emphasizing the dual perspectives (optimal transport vs. RG flow) on how the Monge–Ampère equation influences deep learning. We also propose possible future research directions, including:
\end{itemize}

In this paper, we have therefore shown in an innovative way the critical role of the Monge–Ampère equation in deep learning, particularly in Boltzmann models and generative learning.

Via our method, we have also provided quantum geometry insights into the structure of deep learning, including connections to von Neumann algebras and the CAH cone.

\vskip.2cm 

{{\bf Acknowledgements}
This research is part of the project No. 2022/47/P/ST1/01177 cofunded by the National Science Centre  and the European Union's Horizon 2020 research and innovation program, under the Marie Sklodowska Curie grant agreement No. 945339 \includegraphics[width=1cm, height=0.5cm]{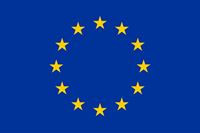}. For the
purpose of Open Access, the author has applied a CC-BY public copyright license
to any Author Accepted Manuscript (AAM) version arising from this submission.
}

\section{Boltzmann Machines and Energy-Based Models}
\subsection{Boltzmann Machines as Energy-Based Models}
We first start by recalling a few notions on Boltzmann machines and describe how Boltzmann machines represent probability distributions via an energy function. 

\subsubsection{Recollections}
A Boltzmann machine is a network of stochastic $N$ neurons, which consists of:
\begin{itemize}
    \item Visible units (stochastic neurons)
    \item Hidden units
    \item Weighted connections that define an energy-based probability distribution.
\end{itemize}

The network is trained by adjusting weights so that the probability distribution over visible units matches the data distribution. The training process involves sampling via Markov Chain Monte Carlo (MCMC) methods like Gibbs sampling.

\subsubsection{Energy-based models}
 Boltzmann Machines are energy-based models that define a probability distribution over a set of variables using an energy function 
$E({\bf x};\theta)$, where 
${\bf x}$ is the state of the system and 
$\theta$ are the model parameters.

The probability of a state 
${\bf x}$ is given by the Boltzmann distribution:
\[p({\bf x};\theta)= \frac{1}{Z(\theta)}
\operatorname{exp}(-E(\textbf{x};\theta)),\]

where $Z(\theta)$ is the partition function. Training a Boltzmann Machine involves minimizing the energy of observed data configurations, which is related to optimizing a loss function in deep learning \cite{HS}.

\section{The Monge–Ampère Equation in Deep Learning and Boltzmann Models}

We investigate whether the evolution or deformation of these energy landscapes during learning can be understood via PDEs. 

In models such as Wasserstein GANs (see, e.g., Arjovsky et al., 2017 \cite{A}) and normalizing flows, one explicitly designs architectures so that the learned map transports a latent (simple) distribution to the data distribution. Here, the network approximates the optimal transport map whose potential function is analogous to the free energy landscape.

\vskip.2cm

When one uses an optimal transport loss (e.g., the Wasserstein distance) during training, the geometry of the probability measures is taken into account. The learned free energy or potential function is forced to reflect the underlying geometric structure—much as the Monge–Ampère equation constrains the curvature of the optimal transport potential.

\subsection{Monge–Ampère Equations in Deep Learning}

 \subsubsection{Monge–Ampère equation for probability distributions}
The Monge–Ampère equation is central in optimal transport theory, which has already found applications in generative models (e.g., Wasserstein GANs). 

\,

This is a mathematical framework for finding the most efficient way to transport mass from one distribution to another, given a cost function.

\, 

In the context of optimal transport, the Monge-Ampère equation arises as a nonlinear partial differential equation (PDE) that characterizes the optimal transport map between two probability distributions.

\, 

If $\rho_0$ and $\rho_1$ are two probability densities, the optimal transport map $T$ satisfies: 
\[\rho_0({\bf x})=\rho_1(T({\bf x}))\cdot\det(DT({\bf x})),\]
where $DT$ is the Jacobian of $T$, \cite{V}.

\,

Both Boltzmann Machines and the Monge-Ampère equation involve energy minimization. In Boltzmann Machines, the energy function $E({\bf x};\theta)$ defines the model, while in optimal transport, the energy is the cost of transporting mass.

\vskip.2cm

The Monge-Ampère equation can be seen as a constraint on the transformation 
$T$ that preserves probability mass, similar to how the partition function 
$Z(\theta)$ in Boltzmann Machines ensures normalization of the probability distribution (see \cite{Cu} for more).

\, 

\subsubsection{Gradient Flows and Wasserstein Distance}
In the context of probability distributions,  the Wasserstein distance, is a metric derived from optimal transport that measures the distance between two probability distributions. It is closely related to the Monge-Ampère equation.

\vskip.2cm 

Training Boltzmann Machines can be viewed as a gradient flow in the space of probability distributions, where the goal is to minimize the energy functional. This is analogous to the gradient flow structure in optimal transport, where the Wasserstein distance is minimized, \cite{JK}.

In the context of Boltzmann machines and generative models, the latent distribution and the data distribution play crucial roles in understanding how the model learns to represent data.

\subsubsection{Definition of the Data Distribution $P_{\text{data}}(x)$}
The data distribution $P_{\text{data}}(x)$ is the probability distribution over the observed (or visible) data points. Formally, given a dataset $\{ x_i \}_{i=1}^{N}$ consisting of samples in some space $X$, the empirical data distribution is given by:
\begin{equation}
P_{\text{data}}(x) = \frac{1}{N} \sum_{i=1}^{N} \delta(x - x_i),
\end{equation}
where $\delta(x - x_i)$ is the Dirac delta function centered at the observed data point $x_i$. In practice, we assume that the data follows an unknown probability distribution, which the model aims to approximate.

\subsubsection{Definition of the Latent Distribution $P_{\text{latent}}(z)$}
The latent distribution $P_{\text{latent}}(z)$ is an auxiliary probability distribution over a lower-dimensional space $Z$, which serves as a simpler representation of the data. This distribution is typically chosen to have a well-defined, tractable form, such as:
\begin{equation}
P_{\text{latent}}(z) = \mathcal{N}(0, I),
\end{equation}
where $\mathcal{N}(0, I)$ is a standard Gaussian distribution in $\mathbb{R}^d$, meaning the components of $z$ are independently normally distributed with mean 0 and variance 1.

The role of the latent distribution is to provide a compact and structured representation of the data, from which samples can be drawn and mapped back to the data space via a transformation.

\subsubsection{Relationship Between the Latent and Data Distributions}
To generate samples that resemble real data, a generative model must learn a mapping $T: Z \to X$ that transforms samples from the latent distribution into the data space. The model effectively learns a push-forward measure, where the transformed distribution $P_{\theta}(x)$ of generated samples should approximate $P_{\text{data}}(x)$. This means:
\begin{equation}
P_{\theta}(x) = T_{\#} P_{\text{latent}}(z),
\end{equation}
where $T_{\#} P_{\text{latent}}$ denotes the push-forward measure under $T$, meaning:
\begin{equation}
P_{\theta}(x) = P_{\text{latent}}(T^{-1}(x)) \left| \det \nabla T^{-1}(x) \right|.
\end{equation}
This is closely related to the Monge–Ampère equation in optimal transport, which describes how probability mass is transported under a convex function.

\subsubsection{The Role in Boltzmann Machines and Energy-Based Models}
In Boltzmann machines, the data distribution is modeled as:
\begin{equation}
P_{\theta}(x) = \frac{1}{Z} \sum_{h} e^{-E_{\theta}(x,h)},
\end{equation}
where $h$ are latent (hidden) variables and $E_{\theta}(x,h)$ is an energy function. The goal is to learn parameters $\theta$ such that $P_{\theta}(x) \approx P_{\text{data}}(x)$.

The latent distribution in this case is implicitly defined over the hidden units $h$, and the model learns a transformation from the hidden units to the visible data.

\subsubsection{Connection to the Monge–Ampère Equation}
The process of mapping from $P_{\text{latent}}(z)$ to $P_{\text{data}}(x)$ can be viewed as an optimal transport problem, where one seeks a transformation $T$ satisfying:
\begin{equation}
\det (D^2 T(x)) = \frac{P_{\text{data}}(x)}{P_{\text{latent}}(T^{-1}(x))}.
\end{equation}
This is precisely the Monge–Ampère equation, which characterizes optimal mass transport. The interpretation is that training a Boltzmann machine or another energy-based generative model is equivalent to solving an implicit Monge–Ampère equation that governs the transport of mass from the latent space to the data space.
Therefore, 
\begin{itemize}
    \item the data distribution $P_{\text{data}}(x)$ is the empirical or real-world distribution of observed samples.
    \item The latent distribution $P_{\text{latent}}(z)$ is an auxiliary, usually simple distribution from which we sample and map to the data space.
    \item The learning process in a Boltzmann machine (or other generative models) involves finding a transport map from $P_{\text{latent}}$ to $P_{\text{data}}$, which can be described by the Monge–Ampère equation in optimal transport.
\end{itemize}
This interpretation provides a geometric and optimal transport perspective on generative models, linking them directly to the theory of Monge–Ampère equations.

\, 

\subsection{Wishart matrices and Wishart cones}

There is a deep connection between Wishart matrices, Wishart cones, and the structure of latent distributions in Boltzmann machines. This connection arises through random matrix theory, optimal transport, and geometric probability theory.

\subsubsection{Wishart Matrices and Latent Distributions}
A Wishart matrix is a random positive semidefinite matrix of the form:
\begin{equation}
W = X X^T
\end{equation}
where $X$ is an $n \times m$ matrix whose entries are independent Gaussian random variables. When 
\begin{equation}
X \sim \mathcal{N}(0, I),
\end{equation}
then $W$ follows the Wishart distribution, denoted:
\begin{equation}
W \sim W_n(m, I).
\end{equation}
This means that the latent variable $Z$ in generative models, which is often sampled from $\mathcal{N}(0, I)$, naturally induces a Wishart-distributed covariance structure in transformed spaces.

\subsubsection{Wishart Cones and Probability Measures}
The Wishart cone consists of all positive semidefinite matrices of fixed rank. It is a homogeneous space that can be interpreted as a convex domain where certain structured probability distributions live.

\begin{itemize}
    \item The latent distribution $P_{\text{latent}}(z) = \mathcal{N}(0, I)$ induces a Wishart-distributed covariance structure when mapped through a neural network.
    \item The data distribution $P_{\text{data}}(x)$ can thus be modeled using a push-forward of this Wishart-like structure.
    \item This aligns with optimal transport theory, where the Monge–Ampère equation governs mass transport between distributions. In the case of latent-variable generative models, the transport map typically transforms a Wishart-type structure into the observed data space.
\end{itemize}

\subsubsection{Connection to Boltzmann Machines and Energy-Based Models}
For Boltzmann machines, the learned weight matrix $W$ (connecting hidden and visible units) is often correlated with the covariance structure of the data. If the data follows a Gaussian-like structure, then:
\begin{equation}
W = X X^T
\end{equation}
where $X$ consists of transformed latent variables. This suggests a Wishart-like nature of the learned weights.

\begin{itemize}
    \item The latent distribution is Gaussian: $Z \sim \mathcal{N}(0, I)$.
    \item The covariance of the transformed distribution follows a Wishart structure.
    \item The learned transport map corresponds to a solution of a Monge–Ampère-like equation, ensuring mass conservation.
\end{itemize}

This reinforces the geometric structure of the Boltzmann machine learning process and provides a new perspective: the generative learning process can be seen as an optimization over the space of Wishart matrices.

\subsubsection{Implication: Geometry and Learning}
The Wishart structure also suggests a natural geometric perspective for deep learning models:

\begin{itemize}
    \item \textbf{Latent-space sampling:} The latent distribution is Gaussian, inducing a Wishart structure.
    \item \textbf{Optimization:} The training process adjusts a transport map (potentially satisfying Monge–Ampère-like equations).
    \item \textbf{Generative reconstruction:} The learned data distribution inherits a covariance structure that aligns with a Wishart cone.
\end{itemize}

This approach connects random matrix theory, optimal transport, and deep learning through the natural emergence of Wishart distributions in neural network training.

Therefore, we can say that 
\begin{itemize}
    \item the latent distribution $P_{\text{latent}}(z) = \mathcal{N}(0, I)$ naturally leads to a Wishart-type structure in covariance matrices.
    \item Boltzmann machines learn weight matrices that inherit Wishart properties, shaping how the model represents data distributions.
    \item The transport map learned by the network is closely related to the Monge–Ampère equation, which governs mass transport in structured probabilistic spaces.
    \item The Wishart cone provides a geometric setting for understanding the representation of covariance matrices in generative models.
\end{itemize}

In particular, we can state that:

\begin{thm}
Let $Z\sim \mathcal{N}(0,I_m)$ be a latent Gaussian distribution in $\R^{m}$, and let $T:\R^m \to \R^n$ be a linear transformation parametrized by a matrix $W\in \R^{n\times m}$. Define the transformed random variable: \[X=WZ.\]
Then, the covariance matrix of $X$ follows a Wishart distribution: $$\Sigma=WW^T\sim  \mathcal{N}_n(m,I).$$
    Furthermore, the space of all such covariance matrices, forms a convex symmetric cone, called the Wishart cone,  consisting of all positive semidefinite matrices of rank at most $min(n,m)$.  This cone is a Monge-Amp\`ere domain.
\end{thm}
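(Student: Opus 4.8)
\emph{Proof strategy.} The plan is to separate the statement into three logically independent claims and dispatch them in order: the distributional identity for the covariance, the description of the set of covariances as a cone, and the Monge--Amp\`ere structure carried by that cone. First I would treat the covariance. For fixed $W$, the vector $X=WZ$ is a linear image of the standard Gaussian $Z\sim\mathcal N(0,I_m)$, hence centered Gaussian with $\mathrm{Cov}(X)=W\,\mathrm{Cov}(Z)\,W^{T}=WW^{T}$ by the pushforward rule for covariances. To give meaning to the assertion that $\Sigma=WW^{T}$ is itself \emph{distributed} rather than a fixed matrix, I would adopt the random-matrix reading already used in Theorem~2, taking the columns $w_1,\dots,w_m$ of $W$ to be independent $\mathcal N(0,I_n)$ vectors; then $WW^{T}=\sum_{j=1}^{m}w_jw_j^{T}$ is a sum of $m$ independent rank-one outer products, which is by definition the Wishart law. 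I would record here that the symbol $\mathcal N_n(m,I)$ in the statement is intended as the Wishart distribution $W_n(m,I)$.

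Next I would identify the set $\mathcal S=\{WW^{T}:W\in\R^{n\times m}\}$. Since $\mathrm{rank}(WW^{T})=\mathrm{rank}(W)\le\min(n,m)$, and conversely every symmetric positive semidefinite matrix of rank $r\le\min(n,m)$ factors as $WW^{T}$ through its spectral decomposition, $\mathcal S$ is exactly the set of positive semidefinite matrices of rank at most $\min(n,m)$; stability under $\Sigma\mapsto\lambda\Sigma$ for $\lambda>0$ follows from $\lambda WW^{T}=(\sqrt\lambda\,W)(\sqrt\lambda\,W)^{T}$. Here lies the first genuine subtlety, which I would flag explicitly: the rank-at-most-$r$ positive semidefinite set is \emph{not} convex when $r<n$, since a convex combination of two rank-one matrices generically has rank two. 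Thus the convexity and symmetry assertions are literally correct only in the regime $m\ge n$, where $\min(n,m)=n$ and $\mathcal S$ is the full positive semidefinite cone $\mathrm{Sym}^{+}_n(\R)$; for $m<n$ one must replace $\mathcal S$ by its convex hull. I would therefore take the relevant cone to be $\mathrm{Sym}^{+}_n(\R)$ and prove it symmetric by exhibiting self-duality under the trace form $\langle A,B\rangle=\mathrm{tr}(AB)$ together with homogeneity under the transitive action $A\mapsto gAg^{T}$ of $GL_n(\R)$ on its interior, invoking the Koecher--Vinberg correspondence with the Euclidean Jordan algebra $\big(\mathrm{Sym}_n,\ \tfrac12(AB+BA)\big)$.

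Finally, for the Monge--Amp\`ere structure I would use the Koszul--Vinberg characteristic function of the cone,
\[
\varphi(\Sigma)=\int_{\Omega^{*}}e^{-\langle\Sigma,\xi\rangle}\,d\xi=c_n\,(\det\Sigma)^{-(n+1)/2},
\]
whose logarithm $\psi=\log\varphi$ is strictly convex on the interior and equivariant, $\varphi(g\Sigma g^{T})=|\det g|^{-(n+1)}\varphi(\Sigma)$. Using $\mathrm{Hess}(-\log\det)_\Sigma:H\mapsto\Sigma^{-1}H\Sigma^{-1}$, whose determinant on $\mathrm{Sym}_n$ equals $(\det\Sigma)^{-(n+1)}$, one obtains the homogeneous real Monge--Amp\`ere equation $\det\big(\mathrm{Hess}\,\psi\big)=\kappa\,e^{2\psi}$ for a dimensional constant $\kappa$; I would sanity-check this on the one-dimensional slice $\Omega=(0,\infty)$, where $\varphi=1/x$, $\psi=-\log x$, and $\psi''=e^{2\psi}$. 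This convex solution of a Monge--Amp\`ere equation is precisely what makes the cone a Monge--Amp\`ere domain, and it is the same potential that furnishes Brenier-type transport maps on the cone. I expect this last step to be the main obstacle: the delicate point is to pin down the exact Monge--Amp\`ere equation and its normalization and to justify it from the homogeneity of the cone rather than through a brute-force Hessian determinant computation. By comparison the Wishart identity and the cone description are routine.
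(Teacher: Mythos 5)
Your proposal is correct, and for the first two claims it follows essentially the same route as the paper: the covariance computation $E[XX^T]=W\,E[ZZ^T]\,W^T=WW^T$ is identical, as is the identification of the image set with positive semidefinite matrices of rank at most $\min(n,m)$. You go beyond the paper in two places, both to your credit. First, you correctly flag that the set of PSD matrices of rank at most $r$ is not convex when $r<n$; the paper's proof asserts closure under addition merely because a sum of PSD matrices is PSD, but $\Sigma_1+\Sigma_2=[W_1\;W_2][W_1\;W_2]^T$ only guarantees rank at most $\min(n,2m)$, so the rank constraint can fail when $m<n$ --- your restriction to $m\ge n$ (or to the convex hull) is the honest fix. (The paper also writes $\lambda\Sigma=(\lambda W)(\lambda W)^T$, which actually equals $\lambda^2\Sigma$; your $\sqrt{\lambda}$ version is the correct one.) Second, and more substantively, the paper disposes of the Monge--Amp\`ere claim entirely by citation to \cite{C0,C2,C3}, whereas you supply a self-contained argument via the Koszul--Vinberg characteristic function $\varphi(\Sigma)=c_n(\det\Sigma)^{-(n+1)/2}$ and the identity $\det\bigl(\mathrm{Hess}(-\log\det)_\Sigma\bigr)=(\det\Sigma)^{-(n+1)}$ on $\mathrm{Sym}_n$, giving $\det(\mathrm{Hess}\,\psi)=\kappa e^{2\psi}$; this is the standard mechanism by which homogeneous self-dual cones become Monge--Amp\`ere domains, and it buys a verifiable statement where the paper offers only a pointer. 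The one caution is that your Monge--Amp\`ere argument lives on the interior of the full cone $\mathrm{Sym}_n^+$, which is nonempty in the rank-constrained set only in the regime your convexity caveat already singles out; the paper's phrase ``on the interior of the Wishart cone'' quietly presupposes the same thing.
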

This perspective offers a new way to study energy-based models, Boltzmann machines, and neural network training using the geometry of Wishart matrices and optimal transport theory.

\begin{proof}
    Since 
\[
Z \sim N(0, I_m),
\]
the covariance of \( X = WZ \) is:

\[
E[XX^T] = E[(WZ)(WZ)^T] = W E[ZZ^T] W^T.
\]

Since 
\[
E[ZZ^T] = I_m,
\]
we obtain:

\[
\Sigma = WW^T.
\]

By definition, if \( W \) is an \( n \times m \) matrix with \( m \) independent Gaussian columns, then \( WW^T \) follows a Wishart distribution:

\[
\Sigma = WW^T \sim W_n(m, I).
\]

We study below the convex cone structure.

The set of all possible \( \Sigma = WW^T \) satisfies the following properties:

If \( \Sigma \) is a covariance matrix in the space, then for any \( \lambda > 0 \),

\[
\lambda \Sigma = (\lambda W)(\lambda W)^T
\]

also belongs to the space, since it is still of the form \( W' W'^T \) for some \( W' \).

If 

\[
\Sigma_1 = W_1 W_1^T \quad \text{and} \quad \Sigma_2 = W_2 W_2^T
\]

are two elements in the space, their sum

\[
\Sigma_1 + \Sigma_2 = W_1 W_1^T + W_2 W_2^T
\]

is also a sum of two positive semidefinite matrices and hence remains positive semidefinite, ensuring the space is closed under addition.

Every element of the space is positive semidefinite:

\[
v^T \Sigma v = v^T WW^T v = (W^T v)^T (W^T v) \geq 0, \quad \forall v \in \mathbb{R}^n.
\]

Furthermore, since \( W \) has at most rank \( \min(n, m) \), so does \( \Sigma \), confirming that the space is restricted to a cone of positive semidefinite matrices of at most this rank.

The space of all covariance matrices induced by a latent Gaussian \( Z \sim N(0, I) \) through a linear transformation \( W \) forms a convex cone of Wishart-distributed positive semidefinite matrices, commonly known as the {Wishart cone}.

\,

Finally, From the works in \cite{C0,C2,C3} it follows that the cone is a Monge--Ampère domain i.e. everywhere locally a Monge--Ampère equation is satisfied on the interior of the Wishart cone. 
\end{proof}
\begin{thm}
The space of all covariance matrices 
\[
\Sigma = WW^T
\]
arising in the Boltzmann learning process, where \( W \) is an \( n \times m \) matrix with independent Gaussian columns, forms a convex cone of positive semidefinite matrices. This space is naturally endowed with a \textit{Wishart structure}, as each covariance matrix follows a Wishart distribution:

\[
\Sigma = WW^T \sim W_n(m, I).
\]

Furthermore, this space coincides with the \textit{Connes–Araki–Haagerup (CAH) cone} associated with a finite-dimensional von Neumann algebra.
\end{thm}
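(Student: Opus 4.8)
The first two assertions are exactly the content of Theorem 1, so the only genuinely new work is the identification with the Connes--Araki--Haagerup cone, and I would structure the argument around realizing the Wishart cone as the natural (standard) cone of a concrete finite-dimensional von Neumann algebra. The plan is to take $M = M_n(\mathbb{C})$, the algebra of $n\times n$ complex matrices, and represent it on the Hilbert space $H = M_n(\mathbb{C})$ equipped with the Hilbert--Schmidt inner product $\langle A,B\rangle = \operatorname{Tr}(A^*B)$, with $M$ acting by left multiplication and its commutant $M'$ by right multiplication. This is the standard (GNS) form of a finite factor and is the natural home for the Tomita--Takesaki data; the matching with the real symmetric cone of Theorem 1 is then obtained by passing to the appropriate real form.

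Next I would fix a faithful normal state $\omega(a)=\operatorname{Tr}(\rho a)$ with density matrix $\rho>0$, whose cyclic and separating vector is $\Omega = \rho^{1/2}\in H$. The closable involution $S\colon a\Omega \mapsto a^*\Omega$ has polar decomposition $S = J\Delta^{1/2}$, and in this finite-dimensional setting one computes directly that $J(\xi) = \xi^*$ and $\Delta^{it}(\xi) = \rho^{it}\xi\rho^{-it}$; for the tracial state $\rho = \tfrac1n I$ one simply has $\Delta = 1$. The CAH cone is then $P^{\natural} = \overline{\{\,aJaJ\Omega : a\in M\,\}} = \overline{\Delta^{1/4}(M_+\Omega)}$. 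Since $JaJ$ acts as right multiplication by $a^*$, a short computation gives $aJaJ\rho^{1/2} = a\rho^{1/2}a^*$, and because $\rho^{1/2}$ is invertible every positive semidefinite $P = (b)(b)^*$ is attained by choosing $a = b\rho^{-1/4}$. Hence $P^{\natural}$ is precisely the cone of positive semidefinite matrices, and I would then invoke its self-duality and modular invariance, the defining properties of the natural cone, to confirm it is the canonical self-dual cone.

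Finally I would match this with the Wishart cone of Theorem 1. Both are convex cones of positive semidefinite $n\times n$ matrices, both are invariant under the congruence action $\Sigma\mapsto g\Sigma g^*$, and both are self-dual with respect to the trace pairing $\langle \Sigma_1,\Sigma_2\rangle = \operatorname{Tr}(\Sigma_1\Sigma_2)$. Identifying the two structures then reduces to checking that the congruence action generating the Wishart family $\Sigma = WW^T$ coincides with the modular/commutant action $\xi\mapsto aJaJ\xi$ that sweeps out $P^{\natural}$, which the explicit formula $aJaJ\rho^{1/2} = a\rho^{1/2}a^*$ makes transparent.

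The main obstacle, to my mind, is the rank condition. The CAH cone of $M_n(\mathbb{C})$ is the full self-dual cone of \emph{all} positive semidefinite matrices, whereas the Wishart cone of Theorem 1 is constrained to rank at most $\min(n,m)$. When $m\ge n$ this constraint is vacuous and the two cones coincide exactly; when $m<n$ one recovers only a rank-bounded boundary stratum, which fails to be self-dual, so the identification must be read either in the regime $m\ge n$ or on the open dense full-rank interior. Making this precise, and in particular reconciling the probabilistic Wishart law $W_n(m,I)$ with the purely order-theoretic self-dual cone structure, is the delicate point, and is where I would lean on the detailed analysis of \cite{C1}.
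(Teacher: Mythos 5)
Your proposal is correct in substance but takes a genuinely different---and far more explicit---route than the paper, whose entire proof of this theorem is the single sentence that it ``is a consequence of the above statement and of the result in \cite{C1}.'' Where the paper defers the CAH identification wholesale to the external reference, you actually carry it out: putting $M_n(\mathbb{C})$ in standard form on the Hilbert--Schmidt space, computing $J\xi=\xi^*$ and $aJaJ\rho^{1/2}=a\rho^{1/2}a^*$, and using invertibility of $\rho^{1/2}$ to show the natural cone $P^{\natural}$ is exactly the positive semidefinite matrices. This is the standard and correct computation, and it buys a self-contained argument that the paper does not supply. More importantly, the obstacle you flag at the end is not a defect of your proof but a genuine imprecision in the theorem as stated: the CAH cone of $M_n(\mathbb{C})$ is the full self-dual cone of positive semidefinite matrices, while the Wishart cone of Theorem 1 carries the rank bound $\min(n,m)$ (and is a cone of \emph{real} symmetric matrices, so a real form must be taken, as you note). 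When $m<n$ the two cannot literally coincide, since the rank-bounded stratum is not self-dual; the identification only holds verbatim for $m\ge n$ or on the full-rank interior. The paper's one-line proof silently elides both of these points, so your version is not merely an alternative derivation but a more honest account of exactly what is and is not being proved.
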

\begin{proof}
    This is a consequence of the above statement and of the result in \cite{C1}.
\end{proof}



\, 

\section{An Alternative Approach via Renormalization Group (RG) Flow}
We discuss the Monge-Ampère aspect of the learning methods for relative Boltzmann machines from a different appraoch. In the renormalization group (RG) framework, one performs a sequence of coarse-graining transformations that map fine-scale (microscopic) descriptions to coarse-scale (macroscopic) ones. If one views the BM’s hidden layers as encoding a coarse-grained representation, then the transformation from the latent space to the data space (via the free energy) is conceptually similar to an optimal transport map between distributions at different scales.

\vskip.2cm

In a Boltzmann machine, learning involves finding an energy function \( E \) for visible units and hidden units such that the marginal over the hidden units models the data distribution.

We highlight that when hidden units are interpreted as a \textit{coarse-grained} representation of the data, the process of learning can be seen as analogous to  \textit{coarse-graining}. That is, the hidden layer averages over fine details present in the visible layer. A coarse-graining interpretation of RBMs has been rigorously discussed in the literature, notably in the work of Mehta and Schwab \cite{MS}.

\,

 Coarse-graining is mathematically the process of mapping a high-dimensional ({\it microscopic}) space \( X \) which here will be a fine-scale probability distribution onto a lower-dimensional ({\it macroscopic}) space \( Y \), which will be a corase-frained distribution via a surjective, measurable map \( \pi \): \[\pi:X\to Y.\] 
 The map $\pi$ maps a (micro)state $x\in X$ to a (macro)state $y\in Y$ that captures the essential, large-scale features of $x$.

 The push-forward measure \( \nu \) on \( Y \) aggregates the probabilities of the microstates in \( X \) that are identified as equivalent. In particular, we have: \[\nu(B)=\mu(\pi^{-1}(B))\]

\,

A given macrostate $y$ corresponds to a certain number of microstates. This number is called the multiplicity.
The entropy 
$\mathscr{S}$ of a macrostate is related to the multiplicity $m$ (i.e. the number of microstates, corresponding to the macrostate) by Boltzmann’s formula:

\[\mathscr{S}=k_B\ln m,\] where $k_B$ is the Boltzmann constant.

 \, 
 
In the context of renormalization or coarse-graining, one often defines an effective Hamiltonian. An effective Hamiltonian \( H_{\text{eff}} \) is defined on the space $Y$ so that the coarse-grained partition function over $Y$,  \[Z_{\text{eff}}=\int_Y \operatorname{e}^{-H_{eff}(y)}d\nu(y),\]  

approximates the original partition function \( Z \).

\, 

 Iterating this process gives rise to the RG flow, which is fundamental in understanding scale invariance and universality.

\, 

 In Relative Boltzmann machines, hidden units can be seen as encoding a coarse-grained version of the visible data.

The mapping $\pi$ can be viewed as an optimal transport problem, where one “transports” the probability mass in the microscopic space into the coarse-grained space while preserving key features (or "mass") of the distribution. 

In such contexts, one expects the existence of a potential function $\phi$ whose gradient realizes the transformation, with the Monge--Ampere equation equation emerges as a mass conservation condition \cite{B}:

\[\det(D^2\phi(x))=\frac{\mu(x)}{\nu(\nabla\phi(x))}.\]

\, 

The free energy in energy-based models (such as Boltzmann machines) can be interpreted as a potential function whose gradient is analogous to an optimal transport map. In some formulations, learning in Boltzmann machines can be seen as finding a transformation that minimizes a Wasserstein-type loss between the model distribution and the data distribution.

\, 

The modern approach of using optimal transport ideas in neural networks is not only computationally advantageous (providing robust loss functions and stable training) but also conceptually analogous to Renormalization Group coarse-graining.

\,

It suggests that the learning process in deep networks might be understood as an optimal transport flow, transforming a simple latent measure into a complex, high-dimensional data distribution in a way that mirrors how physical systems are renormalized.

\subsection{Example}
To describe our example, let us consider the two-dimensional Ising model on a lattice where each site carries a spin $s_i\in\{-1,+1\}$.

The Hamiltonian of the Ising model is given by:
\begin{equation}
H(s) = -J \sum_{\langle i,j \rangle} s_i s_j,
\end{equation}
where $\langle i,j \rangle$ indicates neighboring sites, and $J > 0$ is the coupling constant.  

The probability distribution of a configuration ${\bf s}$ (the microstate) at inverse temperature $\beta$ is
\begin{equation}
P({\bf s}) = \frac{1}{Z} e^{-\beta H({\bf s})},
\end{equation}
with the partition function
\begin{equation}
Z = \sum_{{\bf s}} e^{-\beta H(s)}.
\end{equation}

One can use Monte Carlo methods (e.g., the Metropolis algorithm) to sample configurations from this distribution. In our context, these configurations form the visible data for the Restricted Boltzmann Machine (RBM). Each configuration on a lattice of size $L \times L$ is represented as a vector
\begin{equation}
v \in \{-1, +1\}^{L^2}.
\end{equation}

\, 

Now, an RBM is a two-layer energy-based model consisting of:
\begin{itemize}
    \item A visible layer ${\bf v} \in \{-1, +1\}^{L^2}$ (representing the spins of the Ising model),
    \item A hidden layer ${\bf h} \in \{0,1\}^{n_h}$ with $n_h < L^2$.
\end{itemize}

The joint energy function is given by:
\begin{equation}
E(v, h) = - \sum_{i=1}^{L^2} a_i v_i - \sum_{j=1}^{n_h} b_j h_j - \sum_{i,j} v_i W_{ij} h_j,
\end{equation}
where $W_{ij}$ are the weights between visible and hidden units, and $a_i$ and $b_j$ are the visible and hidden biases, respectively.

\,

The marginal probability of a visible configuration is then given by:
\begin{equation}
P(v) = \frac{1}{Z} \sum_{h} e^{-E(v,h)},
\end{equation}
and the associated free energy is:
\begin{equation}
F(v) = - \sum_i a_i v_i - \sum_j \log(1 + \exp(b_j + \sum_i W_{ij} v_i)).
\end{equation}

We illustrate the coarse-graining procedure via the hidden layer. 

Because the number of hidden units $n_h$ is chosen to be significantly smaller than $L^2$, the RBM must learn a compressed (or coarse-grained) representation of the data. The process follows these steps:

\vskip.2cm

1. First we need to map microstates to macrostates.
The visible configuration ${\bf v}$ (a microstate) is mapped to a hidden representation ${\bf h}$. Each hidden unit can be thought of as detecting a local pattern or feature (e.g., the predominant spin orientation in a block of sites).

\vskip.2cm

2. When the RBM is trained, the weight matrix $W$ tends to organize so that each hidden unit responds to a group of nearby spins. For example, on a $4 \times 4$ lattice, if we use $n_h = 4$ hidden units, each hidden unit might effectively summarize the state of a $2 \times 2$ block of spins. This is analogous to block spin renormalization where many spins (microstates) are grouped into a single coarse variable (macrostate).

\vskip.2cm

3. We turn our attention to free energy. 
The free energy $F({\bf v})$ computed by the RBM encodes the likelihood of a configuration. Because the hidden layer integrates over many microscopic configurations (via summation over ${\bf h}$), the free energy provides a coarse-grained description that emphasizes large-scale, collective behavior rather than individual spin flips.

In particular, after training, one often finds that the learned weights have localized, block-like structures. A hidden unit might have strong weights for visible units corresponding to a particular $2 \times 2$ block in the lattice, effectively representing the "average" or majority state of that block—a coarse-grained variable.

Concerning the effective Hamiltonian, 
the free energy $F(v)$ can be seen as an effective Hamiltonian $H_{\text{eff}}(v)$ for the coarse-grained variables. In renormalization group (RG) language, integrating out the hidden variables (or summing over $h$) is analogous to obtaining an effective theory for the macroscopic degrees of freedom.

\, 

The description above coincides with Wilson’s RG, where one defines a transformation $\pi: X \to Y$ that projects a fine-scale configuration $x \in X$ (microstate) to a coarse-scale configuration $y \in Y$ (macrostate). The effective Hamiltonian $H_{\text{eff}}(y)$ is  defined by:
\begin{equation}
e^{-H_{\text{eff}}(y)} = \sum_{x: \pi(x) = y} e^{-H(x)}.
\end{equation}

In the RBM, the marginal probability $P({\bf v})$ is obtained by summing over the hidden units:
\begin{equation}
e^{-F({\bf v})} = \sum_{\bf h} e^{-E({\bf v},{\bf h})}.
\end{equation}
This is directly analogous to integrating out the microscopic degrees of freedom, yielding a coarse-grained description encoded by $F({\bf v})$.

So, to conclude in this approach we have the following important steps: 
\begin{itemize}
    \item The \textbf{microscopic level} coincides with the visible layer of the RBM and it represents the full microscopic spin configuration of the Ising model.
    \item The  \textbf{coarse-graining} is constructing using the hidden layer.  Having fewer units than the visible layer, it forces the model to learn a compressed, coarse-grained representation of the data.
    \item The free energy $F(v)$ acts as an effective Hamiltonian for the coarse-grained description.
\end{itemize}

In essence, training an RBM on Ising model configurations provides a concrete example of coarse-graining in a learning system—where the hidden layer captures large-scale features analogous to block spin variables in renormalization group theory.

\section{Conclusion and Future Directions}
By considering these connections, we  contribute to a deeper theoretical understanding of deep learning. This work could potentially bridge gaps between statistical mechanics, geometric analysis, quantum geometry and modern machine learning—opening up new paths for both theoretical exploration and practical algorithm design.

\vskip.2cm

This interdisciplinary approach not only enriches the theoretical landscape but  paves the way for novel techniques in training and understanding complex models.

\begin{itemize}
    \item We have considered extending the quantum geometric approach to broader class of deep generative models.

  \item We have investigated the computational implications of Monge–Ampère-based learning.

  \item We have furthermore, explored further connections between renormalization, optimal transport, and learning dynamics.
\end{itemize}

In Section 4, we have considered an alternative RG-based approach, showing another perspective on the Monge–Ampère domain in learning, bridging ideas from statistical mechanics, renormalization, and deep learning to present a unified mathematical framework for hierarchical feature learning.

\, 

This paper thus contributes to the mathematical foundations of deep learning, offering both theoretical insights and practical implications for understanding learning as an optimal transport problem and a renormalization process.

\,

Our work suggests possible experiments, where the theoretical predictions (based on PDE analysis) are tested against the empirical behavior of deep learning models. For instance, one could investigate whether introducing regularization terms inspired by these equations (Monge--Ampere) leads to more robust or interpretable models.

\end{document}